\documentclass{article}
\usepackage{iclr2027_conference,times}
\usepackage{amsmath,amssymb,mathtools,amsthm}
\usepackage{booktabs,tabularx,array,multirow}
\usepackage{graphicx,xcolor,placeins}
\usepackage[expansion=false]{microtype}
\usepackage{xspace,url,hyperref}
\usepackage[nameinlink,capitalize,noabbrev]{cleveref}
\hypersetup{pdftitle={CADOC: Cache-Aware Dynamic Object Context for Long-Horizon Agents},pdfauthor={Anonymous authors},colorlinks=true,linkcolor=blue!50!black,citecolor=blue!50!black,urlcolor=blue!50!black}
\graphicspath{{figures/}}
\newtheorem{proposition}{Proposition}

\newcommand{\cost}{\mathcal{C}}
\newcolumntype{Y}{>{\raggedright\arraybackslash}X}
\newcolumntype{L}[1]{>{\raggedright\arraybackslash}p{#1}}
\title{CADOC: Cache-Aware Dynamic Object Context for Long-Horizon Agents}

\author{
Junjie Yao$^{1}$, \quad
Zhangchen Zhou$^{1}$, \quad
Zhi-Qin John Xu$^{1,2,}$\thanks{Corresponding author: \texttt{xuzhiqin@sjtu.edu.cn.}}
\\[2mm]
$^{1}$School of Mathematical Sciences, Shanghai Jiao Tong University, \\Shanghai, China.\\
$^{2}$Institute of Natural Sciences, Shanghai Jiao Tong University, \\Shanghai, China.
}

\iclrfinalcopy

\begin{document}
\maketitle
\pagestyle{plain}
\suppressfloats[t]

\begin{abstract}
For a long-horizon agent, context is the bottleneck: the history is resent with every request, the window caps task length, and reasoning degrades as the history grows. Replacing structured objects with compact retrieval Cards shortens the prompt and keeps the exact originals retrievable, but editing the history can break prefix-cache reuse, and prior recoverable methods time their edits by forecasts of future reuse or by preset intervals. We propose CADOC (Cache-Aware Dynamic Object Context), an online algorithm that replaces structured objects with compact Cards while preserving exact, on-demand retrieval of their original contents. CADOC schedules replacements in batches by balancing accumulated waiting cost against shared cache-reconstruction cost. Its scheduling rule follows from an economic order quantity trade-off, recovers the optimal integer batch under stationary assumptions. Across evaluation, CADOC consistently achieves the lowest aggregate input cost among the compared configurations, which reduces input cost by approximately 40\% on average while maintaining task performance close to full context. CADOC thus provides a cost-derived approach to compressible context management, demonstrating that efficient compression depends not only on shortening prompts but also on scheduling edits to preserve cache reuse.
\end{abstract}

\section{Introduction}
\label{sec:intro}

Context is the bottleneck for long-horizon agents. Every request resends the history, so retained tokens incur costs again, even on a cache hit. The context window limits task length, and models struggle to use relevant information in long histories~\citep{kang2025acon,sun2025contextfolding}. Managing context therefore affects both cost and performance.

Compression is one remedy, and much of what accumulates invites it: code, file contents, logs, and tool outputs matter when produced but seldom need to stay in view. Lossy methods, whether token pruning, learned agent compression, or subtrajectory folding~\citep{jiang2023llmlingua,kang2025acon,sun2025contextfolding}, discard details a later step may need; a structured object can instead be moved to external storage~\citep{packer2023memgpt} and replaced by a compact retrieval Card that keeps the exact original retrievable (\cref{fig:overview}b).

\begin{figure}[hbpt]
\centering
\includegraphics[width=\linewidth]{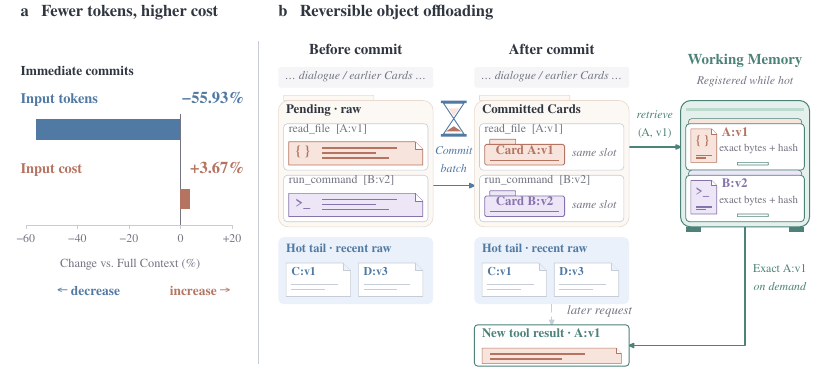}
\caption{(a) Immediate cuts cumulative input tokens by 55.93\% but raises input cost by 3.67\% versus Full Context. (b) Pending objects become Cards in place; the hot tail stays visible, and Working Memory returns exact contents.}
\label{fig:overview}
\end{figure}

A recoverable method must still decide when to commit; the two closest systems account for the prefix cache. Self-GC commits the edits a planner LLM proposes once their expected saving over future requests outweighs the cache break, so it must estimate future reuse~\citep{hao2026selfgc}. TokenPilot runs an LLM estimator every fixed, empirically chosen number of turns and evicts segments it judges expired~\citep{xu2026tokenpilot}. Neither sets the timing from what is already observable. Editing an earlier object can invalidate the cached suffix after it, including the raw \emph{hot tail} of recent blocks, so separate commits repeat the same reconstruction while a batch shares it; and waiting keeps payloads in every intervening request. On an example from Hermes Compression Evaluation (HCE) histories~\citep{nous2026hce}, we find that immediate replacement removes 55.93\% of cumulative input tokens yet raises input cost by 3.67\% (\cref{fig:overview}a).

CADOC, \emph{Cache-Aware Dynamic Object Context}, is a fully online, LLM-free scheduler that determines when to commit object replacements in batches. It requires \emph{no future information or workload forecasts, no additional LLM calls, and no per-setting batch-size tuning}: every scheduling decision uses only current shortening and cache estimates. Rather than fixing a batching interval in advance, CADOC dynamically balances the cache-reconstruction cost that a batch can share against the cost accumulated by delaying compression, and commits when further waiting no longer pays.

Our experiments demonstrate two complementary advantages of CADOC. First, \emph{CADOC reduces agent input costs by approximately 40\% while maintaining comparable task performance}. We evaluate the complete system through continuing task chains on several benchmarks, retaining conversation history across tasks to capture the costs of long-horizon execution. Relative to a full-context baseline, CADOC reduces input costs substantially. Second, \emph{CADOC consistently achieves the lowest cost among the compared commit policies across diverse workloads and hyperparameter settings}. We evaluate commit scheduling through fixed-history replays, comparing CADOC with immediate replacement, fixed-size batching, and token-threshold policies. CADOC attains the lowest cost at all settings. Whereas the best fixed batch size changes across settings, CADOC adapts its commit timing automatically, obtaining these results without future information, additional LLM calls for scheduling, or per-setting batch-size tuning.

\FloatBarrier
\section{Objects, Cards, and the Hot Tail}
\label{sec:substrate}

CADOC shortens the visible context by reversible object externalization, not lossy compression: Working Memory keeps the original bytes, and a Card stands in for the object in the prompt. Registration and replacement preserve the object’s original position in the interaction history; retrieval returns its immutable version as a new tool-result occurrence.

\paragraph{Objects and immutable versions.}
Runtime metadata and parsing rules identify structured spans such as file contents, code, tool results, logs, tables, and generated artifacts. Reliably identified objects whose replacement reduces tokens are registered before replacement; ordinary dialogue stays visible. Each stored object has an immutable version, so a historical Card recovers the contents observed at that point even if the file is modified later.

\paragraph{Cards and exact retrieval.}
A Card records the object's identity, version, source, and structural metadata, with no LLM-generated synopsis. The Card below replaced a terminal search result in a recorded task; ellipses mark omitted metadata:
\begin{quote}\begin{minipage}{\linewidth}\small\ttfamily\raggedright
<OBJECT\_CARD>\\
\{"contains": \{..., "top\_level\_keys":\\
\quad ["output", "exit\_code", "error"]\}, ...,\\
\quad "object\_ref": "object://obj\_0722dfa749e04b58a0094af4@v1",\\
\quad "origin": \{..., "tool": "terminal"\}, ...,\\
\quad "type": "structured\_data", "version": 1\}\\
</OBJECT\_CARD>
\end{minipage}\end{quote}
The model calls \texttt{retrieve\_object}, an ordinary function tool taking \texttt{object\_ref} and \texttt{reason}; \cref{app:cards} shows the recorded call, recovery check, and subsequent action.

\paragraph{Hot tail and pending sequence.}
An \emph{inference block} is a model response together with its tool results. CADOC keeps a recent raw suffix bounded by a maximum block count $h$ and a token budget $T_{\mathrm{hot}}$. Once an object leaves this protected suffix, it joins the pending sequence and stays raw until a \emph{batch commit} replaces its prepared span with a Card.

\section{Cache-Aware Batch Scheduling}
\label{sec:scheduling}

Batching lets several object replacements share the cost of reconstructing the cached hot tail, but waiting to form a batch keeps removable payloads in every intervening input. We derive the batch size that balances these costs in a stationary stream, then restate its optimality condition in quantities available during execution; the result is the online rule CADOC uses. \Cref{fig:commit} summarizes the affected interval and the decision.

\begin{figure}[t]
\centering
\includegraphics[width=\linewidth]{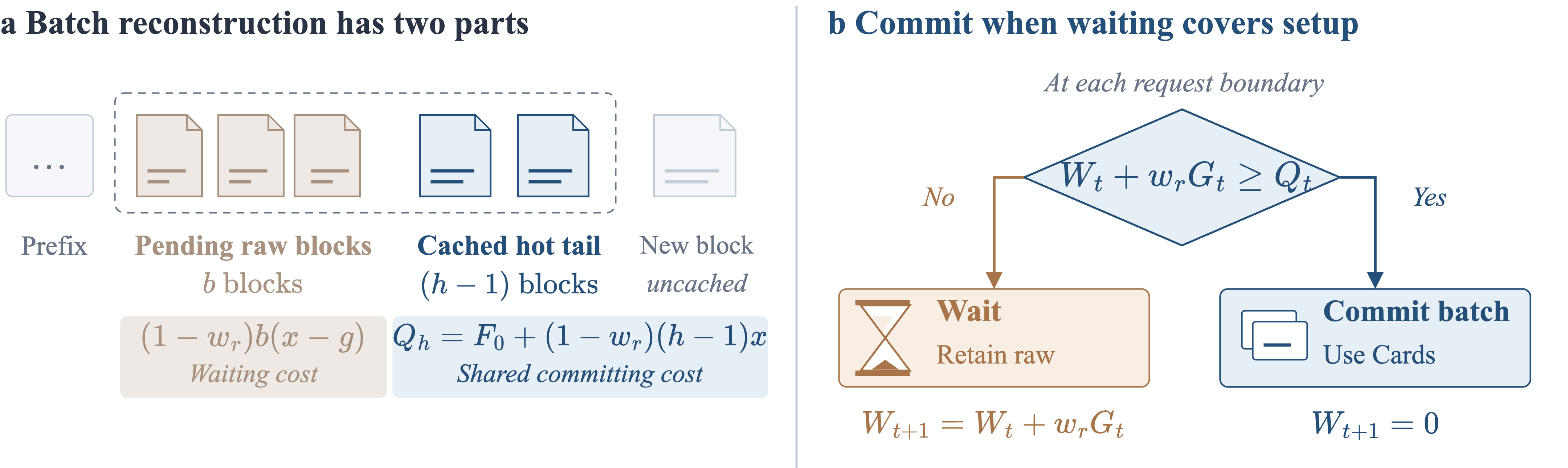}
\caption{The cached interval and the batch-commit rule. Left: the $b$ pending blocks shorten to $b(x-g)$ tokens, while the $h-1$ cached hot blocks stay raw. Right: waiting-loss updates and the commit decision.}
\label{fig:commit}
\end{figure}

\subsection{Replacement Changes Both Prompt Length and Cache Reuse}
\label{sec:cachecost}

Normalize the cost of an uncached input token to one, and let the cache-read weight $w_r\in[0,1]$ be the relative cost of a cached token. Total input cost is
\begin{equation}
\cost=\sum_t(N_t+w_rR_t),
\label{eq:inputcost}
\end{equation}
where $N_t$ and $R_t$ count the uncached and cached input tokens of request $t$. The main experiments use $w_r=0.1$.

Consider replacements inside an otherwise reusable prefix. Let $I$ be the number of cached tokens from the first edit to the cache frontier, and $G$ the number of tokens removed. Keeping the interval costs $w_rI$ on the next request; replacing it requires fresh processing of $I-G$ tokens. The first-request cost difference is therefore
\begin{align}
\Delta\cost_{\mathrm{first}}
&=(I-G)-w_rI+F \nonumber\\
&=\underbrace{(1-w_r)(I-G)+F}_{\text{reconstruction premium}}
-\underbrace{w_rG}_{\text{shortening benefit}}.
\label{eq:firstcost}
\end{align}
Here $F$ is an optional auxiliary commit cost in the same units. The input-token accounting of \cref{eq:inputcost} has no auxiliary costs, so $F=0$ there; Card tokens are already part of $G$ and are not charged again through $F$.

\subsection{Stationary Model and the EOQ Correspondence}
\label{sec:eoq}

Consider one new block per request cycle, of fixed length $x$ and containing one replaceable object whose replacement shortens the block by $g\in(0,x)$ tokens. The token budget is inactive here, so the hot tail holds exactly $h\geq1$ blocks, and eligible objects have completed their initial raw exposure. Each commit replaces all $b$ pending blocks. We assume stable prefix caching, no retrieval, and an indefinitely continuing stream; in this block model the editable span starts at the beginning of each block.

Decisions are made after a block completes and before the next request. At that point the newest block has not yet appeared in any input, while the preceding $h-1$ hot blocks are cached, so a commit affects a cached interval of length
\begin{equation}
I_b=\underbrace{bx}_{\text{pending blocks}}+\underbrace{(h-1)x}_{\text{cached hot tail}}.
\label{eq:Ib}
\end{equation}
The newest block needs fresh processing under either decision, so its cost cancels from the comparison.

To compare batch sizes on a common basis, we use an idealized reference that replaces every object as soon as it is eligible and gets a cache hit on the new Card representation without reconstruction. The reference is an accounting device: relative to it, a real schedule pays for delayed shortening and for reconstruction, and all other input costs are common.

An object retained for one extra cached request costs $r=w_rg$. The $b$ objects of a batch wait $b-1,b-2,\ldots,0$ requests after becoming eligible, so their total waiting loss is
\begin{equation}
W_b=r\sum_{k=0}^{b-1}k=\frac{rb(b-1)}{2}.
\label{eq:waiting-loss}
\end{equation}
At the commit, schedule and reference both use Cards, so the only extra cost is the reconstruction premium of \cref{eq:firstcost}, namely $(1-w_r)(I_b-bg)+F(b)$. Write $F(b)=F_0+bf$, with shared overhead $F_0\geq0$ and per-object overhead $f\geq0$. The average excess cost per object is
\begin{align}
\ell_h(b)
&=\frac{W_b+(1-w_r)(I_b-bg)+F_0+bf}{b} \nonumber\\
&=\underbrace{f+(1-w_r)(x-g)}_{K}
+\frac{\overbrace{F_0+(1-w_r)(h-1)x}^{Q_h}}{b}
+\frac{r(b-1)}{2}.
\label{eq:eoq}
\end{align}
Thus,
\begin{equation}
\ell_h(b)=K+\frac{Q_h}{b}+\frac{r(b-1)}{2},\qquad r=w_rg.
\label{eq:components}
\end{equation}
The pending blocks contribute a constant $K$ per object. Batching lowers the per-object share of the shared cost, $Q_h/b$, but raises the per-object waiting loss $r(b-1)/2$: the EOQ trade-off between setup and holding costs~\citep{harris1913parts,erlenkotter1990harris}.

\begin{proposition}[Stationary optimal batch]
\label{prop:eoq}
For $Q_h\geq0$ and $r>0$, the continuous minimizer over $b\geq1$ is
\begin{equation}
b_{\mathrm{cont}}^*=\max\left\{1,\sqrt{\frac{2Q_h}{r}}\right\}.
\label{eq:sqrt}
\end{equation}
A positive integer $b$ minimizes $\ell_h$ if and only if
\begin{equation}
\frac{rb(b-1)}{2}\leq Q_h\leq\frac{rb(b+1)}{2}.
\label{eq:integer}
\end{equation}
\end{proposition}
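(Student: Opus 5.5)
We treat $\ell_h$ as a function of a real variable $b\ge 1$ via \cref{eq:components}. The constant $K$ does not affect minimizers, so we work with $\phi(b)=Q_h/b+r(b-1)/2$.

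For the continuous claim, we differentiate: $\phi'(b)=-Q_h/b^2+r/2$ and $\phi''(b)=2Q_h/b^3\ge 0$. Hence $\phi$ is convex on $(0,\infty)$, and when $Q_h>0$ its unconstrained stationary point is $b=\sqrt{2Q_h/r}$. If this point is at least $1$, convexity makes it the minimizer on $[1,\infty)$. If it is below $1$, then $\phi'>0$ on $[1,\infty)$ and the minimizer is the boundary point $b=1$. When $Q_h=0$ we have $\phi'=r/2>0$, so the minimizer is again $b=1$, consistent with $\sqrt{0}=0$. Combining the cases gives \cref{eq:sqrt}.

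For the integer claim, we avoid rounding $b_{\mathrm{cont}}^*$ and compare neighbouring integers directly. The forward difference is
\[
\phi(b+1)-\phi(b)=\frac{r}{2}-\frac{Q_h}{b(b+1)},
\]
which is nondecreasing in $b$; this is discrete convexity. So the integer sequence $\phi(1),\phi(2),\ldots$ first weakly decreases and then weakly increases. An integer $b\ge1$ is therefore a minimizer if and only if $\phi(b)\le\phi(b+1)$ and, when $b\ge2$, also $\phi(b)\le\phi(b-1)$. The first condition is equivalent to $Q_h\le rb(b+1)/2$. The second, using the difference between $b-1$ and $b$, is equivalent to $Q_h\ge rb(b-1)/2$. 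For $b=1$ the lower inequality $0\le Q_h$ holds automatically by hypothesis, so the stated two-sided condition \cref{eq:integer} covers that case as well.

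The one step needing care is the claim that local optimality among integer neighbours implies global optimality. We justify it by telescoping: since the differences are monotone, a nonnegative difference at $b$ stays nonnegative for all larger indices, and a nonpositive difference at $b-1$ stays nonpositive for all smaller indices. Ties at equality give multiple minimizers, which is why the statement is an "if and only if" with non-strict inequalities.
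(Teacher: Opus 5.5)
Your proposal is correct and follows the paper's proof essentially step for step. It uses the same derivative and convexity argument with the $b=1$ boundary case (including $Q_h=0$), and the same nondecreasing forward difference $d_b=r/2-Q_h/(b(b+1))$, characterizing integer optima by $d_{b-1}\le0\le d_b$ with the $b=1$ lower bound supplied by $Q_h\ge0$; your explicit telescoping step from local to global optimality is a small detail the paper leaves implicit.
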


The square-root expression follows from $\ell_h'(b)=-Q_h/b^2+r/2$. For integer batches the relevant comparison is
\begin{equation}
\ell_h(b+1)-\ell_h(b)=\frac r2-\frac{Q_h}{b(b+1)}.
\label{eq:batch-difference}
\end{equation}
These differences are nondecreasing in $b$, so a batch is optimal exactly when enlarging it no longer lowers the cost and the previous enlargement did not raise it; these two conditions are \cref{eq:integer}. \Cref{app:proofs} gives the full accounting, proofs, and boundary cases. Larger shared reconstruction costs favor larger batches; larger per-request savings favor earlier commits.

\subsection{From the Stationary Optimum to an Online Rule}
\label{sec:online}

In the stationary model, a pending batch of size $b$ offers shortening $G_b=bg$, and its waiting loss satisfies
\begin{equation}
W_b+w_rG_b=\frac{rb(b-1)}2+rb=\frac{rb(b+1)}2=W_{b+1}.
\label{eq:stationary-crossing}
\end{equation}
Hence \cref{eq:integer} reads $W_b\leq Q_h\leq W_b+w_rG_b$. Start from an empty pending sequence and take the first $b$ at which $W_b+w_rG_b\geq Q_h$. If $b>1$, the previous decision did not cross, so $W_b<Q_h$; if $b=1$, then $W_1=0\leq Q_h$. The first crossing therefore selects an optimal integer batch, the smaller one when adjacent sizes tie.

For a changing history, CADOC evaluates the same crossing with current quantities. At boundary $t$, let $\mathcal P_t$ be the set of pending object occurrences. Their potential shortening is
\begin{equation}
G_t=L_t^{\mathrm{keep}}-L_t^{\mathrm{commit}},
\label{eq:pending-gain}
\end{equation}
where the two lengths describe the next prompt with all pending contents retained or replaced; under additive token accounting this is the sum of raw-to-Card reductions. Let $W_t$ be the waiting loss accumulated since the current pending contents became eligible; newly eligible contents enter with zero loss, and carrying the pending contents through one more cached request adds
\begin{equation}
\Delta W_t=w_rG_t.
\label{eq:waitincrement}
\end{equation}

Let $H_t^{\mathrm{shared}}$ count the already cached tokens after the newest pending block and before the reusable cache frontier. These tokens survive the replacements but must be reconstructed with them, so the estimated shared reconstruction cost is
\begin{equation}
Q_t=F_{0,t}+(1-w_r)H_t^{\mathrm{shared}}.
\label{eq:Qonline}
\end{equation}
This reduces to $Q_h$ in the stationary model. The recorded replay uses $F_{0,t}=0$; \cref{app:online-scope} details how this suffix is separated from the pending region.

For nonempty $\mathcal P_t$, the \emph{economic crossing} is
\begin{equation}
\boxed{\text{commit all pending replacements if }W_t+w_rG_t\geq Q_t.}
\label{eq:trigger}
\end{equation}
Otherwise the controller waits and adds $w_rG_t$ to the ledger after each request that carries the pending contents raw; committed entries are cleared once a successful model response confirms the commit. Hard capacity limits can force an earlier commit. Since the earliest pending edit fixes where reconstruction begins, committing the later pending replacements in the same request shortens the interval at no additional cache break, which is why CADOC commits the whole pending sequence at once.

The evaluated runtime estimates $G_t$ from rough keep and commit prompt lengths, $Q_t$ from prompt and cache estimates, and $W_t$ from a persistent per-delta waiting ledger; the recorded decisions in \cref{app:runtime} use the same symbols for these estimates.

Under the stationary assumptions, the crossing recovers the optimum exactly; in changing histories it adapts commit timing to the observed shortening and cache estimates. For the associated waiting--setup objective with fixed eligibility, \cref{app:reduced-batching-bound} also proves a finite-horizon competitive bound.

\section{Fixed-History Evaluation of Commit Scheduling}
\label{sec:hce}

\subsection{Protocol and Compared Policies}

HCE replay~\citep{nous2026hce} compares commit policies on identical source histories. Model outputs, object versions, and Card contents are frozen, so each configuration yields a prompt stream whose lengths and reusable prefixes determine its cost. We evaluated at $h\in\{2,4,8,16\}$. All costs follow \cref{eq:inputcost} with $w_r=0.1$; \cref{app:price} examines other cache-read weights.

We compare Full Context, Immediate, fixed batches $b\in\{2,4,8,16\}$, five token thresholds from 4,096 to 65,536, and CADOC. Full Context retains everything, Immediate commits at eligibility, and the remaining baselines wait for a block count, or a pending raw-token threshold. CADOC uses the object lifecycle of \cref{sec:substrate}, rough prompt and cache estimates, and a persistent waiting ledger. \cref{app:protocol} details the replay paths and metric definitions.

\subsection{Aggregate Cost and Sensitivity to Batch Size}
\label{sec:hce-rank}
\label{sec:transfer}

CADOC achieves the lowest aggregate input cost among all evaluated configurations at every tested hot-tail setting, without per-setting batch-size tuning (\cref{fig:hcescheduling}). Relative to Full Context, it saves 30.65\%, 20.61\%, 15.14\%, and 11.56\% at $h=2,4,8,16$, respectively.

\begin{figure}[htbp]
\centering
\includegraphics[width=\linewidth]{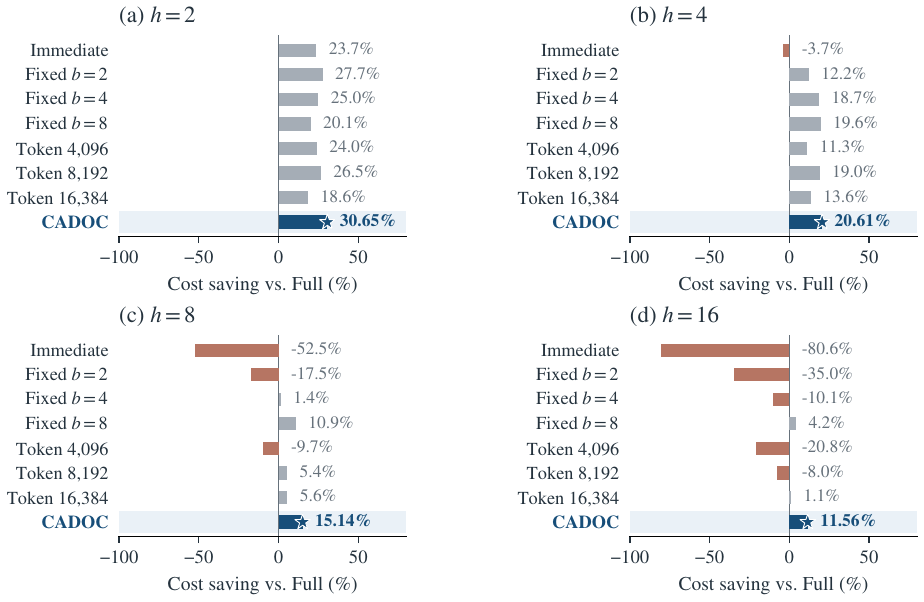}
\caption{Aggregate input-cost savings relative to Full Context over HCE. Negative values indicate costs above Full Context.}
\label{fig:hcescheduling}
\end{figure}

By contrast, a fixed batch size that works well at one hot-tail setting can lose its benefit or even increase cost at another (\cref{fig:transfer}a). The best tested fixed batch shifts from $b=2$ at $h=2$ to $b=8$ at $h=4,8,16$. Keeping $b=2$ increases cost relative to Full Context by approximately 17.5\% at $h=8$ and 35.01\% at $h=16$, while $b=4$ increases cost by approximately 10.1\% at $h=16$. CADOC avoids these reversals without manual batch-size adjustment and remains cheaper than the best tested fixed-batch configuration selected separately for each setting.

CADOC's recorded batch sizes make this adaptation explicit (\cref{fig:transfer}b). As $h$ increases, its mean batch size rises from 1.50 to 3.25, 4.75, and 5.00 blocks, while its total commit count falls from 20 to 3. This trend agrees with the stationary prediction in \cref{eq:sqrt}: a larger hot tail increases the shared cache-reconstruction cost, favoring larger batches over which to amortize that cost. CADOC automatically makes this adjustment through the same online rule, rather than requiring a manually chosen batch size for each setting. Aggregate continuation-quality results across the four hot-tail settings appear in \cref{app:quality-grid}. 

\begin{figure}[ht]
\centering
\includegraphics[width=\linewidth]{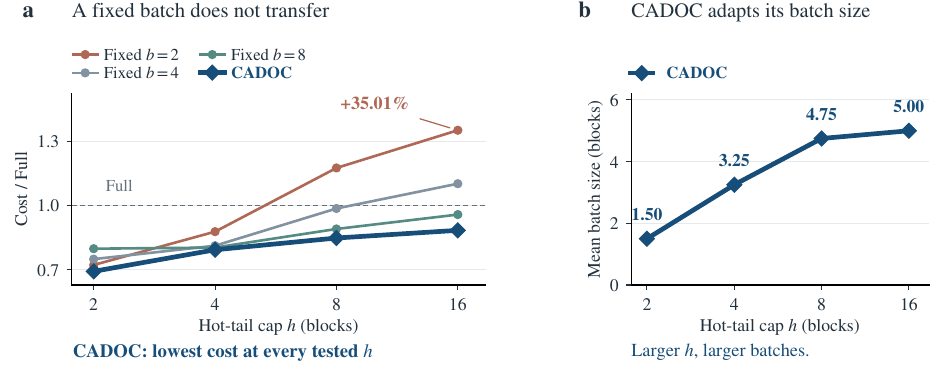}
\caption{Commit costs and batch sizes across hot-tail settings. (a) Aggregate input costs of fixed-batch configurations and CADOC, normalized by Full Context. (b) CADOC's mean number of blocks per committed batch at $h=2,4,8,16$.}
\label{fig:transfer}
\end{figure}
\vspace{-10pt}


\section{System-Level Evaluation on Benchmark Chains}
\label{sec:benchmarks}

To evaluate the complete system, we run CADOC and a full-context control arm, FULL, through the same fixed task lists with history retained across successive tasks; we call each such run a benchmark chain. Each arm develops its own model and tool trajectory, so total input cost reflects both the changed context representation and any change in model requests or retrieval. Costs cover every evaluated input request, including retrieved payloads and their later replacement by retrieval receipts, under \cref{eq:inputcost} with $w_r=0.1$. \Cref{app:protocol} gives the configuration and execution scope.

\begin{table}[ht]
\caption{Complete benchmark-chain results. Input denotes cumulative input tokens, Cost uses $N+0.1R$, and both reductions are relative to FULL.}
\label{tab:benchmarks}
\centering\small
\setlength{\tabcolsep}{4pt}
\begin{tabular}{lrrrr}
\toprule
Benchmark & FULL solved & CADOC solved & Input $\downarrow$ & Cost $\downarrow$\\
\midrule
Terminal-Bench 2.1~\citep{merrill2026terminalbench} & 38/89 & 36/89  & 49.40\% & 47.13\%\\
LongMemEval-V2~\citep{wu2026longmemevalv2} & 10/18 & 10/18  & 80.60\% & 42.40\%\\
SWE-bench~\citep{jimenez2024swebench} & 31/50 & 29/50  & 44.72\% & 34.12\%\\
\bottomrule
\end{tabular}
\end{table}
\subsection{Task Performance and Long-Horizon Cost Savings}
\label{sec:chain-curves}

CADOC substantially reduces input cost while maintaining task performance close to FULL (\cref{tab:benchmarks}). Task-completion rates decrease by at most four percentage points, whereas input cost falls by 34.12--47.13\% and cumulative input tokens by 44.72--80.60\%. Thus, CADOC achieves substantial savings with only small observed differences in task performance. 

The cumulative-cost curves in \cref{fig:chaincost} further show that CADOC's input cost grows more slowly overall than FULL's. As execution continues and conversation histories lengthen, the gap between the two curves becomes increasingly pronounced. Replacing historical objects with compact Cards avoids repeatedly carrying their full contents into subsequent requests, allowing savings from earlier replacements to accumulate. This growing separation highlights CADOC's advantage in long-horizon execution, where historical content would otherwise be processed repeatedly over many requests.

These savings remain substantial even when the agent actively retrieves externalized objects. On LongMemEval-V2, CADOC preserves every task-level outcome while recording 40 retrieval events over 146 model requests and reducing input cost by 42.40\%. The reported costs include requests carrying retrieved contents, so the savings account for the input overhead of restoring information when needed. The recorded case in \cref{sec:substrate,app:cards} further verifies byte-exact recovery and successful downstream use, showing that replacing objects with Cards preserves access to task-relevant content rather than discarding it.
\begin{figure}[htpb]
\centering
\includegraphics[width=\linewidth]{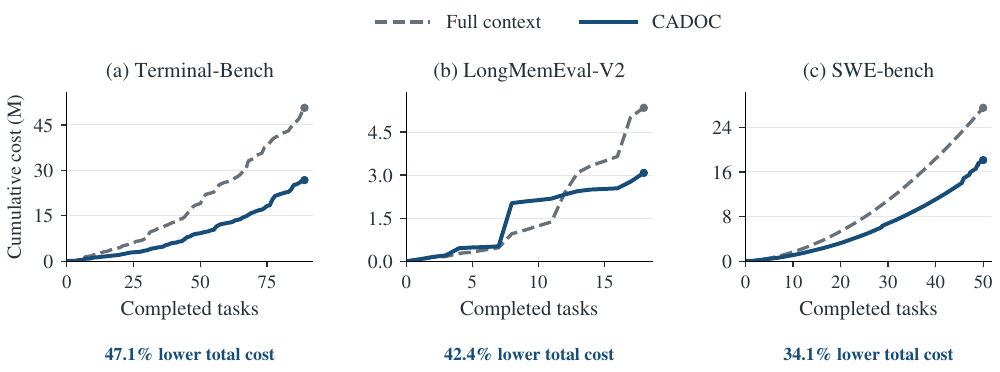}
\caption{Cumulative input cost during the three benchmark chains. The horizontal axis counts completed tasks. Gray dashed curves denote FULL and blue curves denote CADOC.}
\label{fig:chaincost}
\end{figure}

\subsection{Replay on Benchmark Histories}
\label{sec:benchmark-replay}

To compare compression policies on identical agent trajectories, we use the retained conversation histories exported from the FULL runs on Terminal-Bench, SWE-bench, and LongMemEval-V2. We then reconstruct the successive inference boundaries and replay each history in chronological order under different compression configurations, rather than compressing only the final context snapshot. During replay, each policy determines when to replace eligible objects; the resulting prompt lengths and reusable cache prefixes determine its input cost at each request. Context and cache state persist across task boundaries. This protocol compares the costs of different compression policies while holding the recorded agent behavior fixed.

CADOC achieves the lowest aggregate input cost among all evaluated configurations on all three benchmark histories (\cref{fig:benchmark-replay}). This consistent advantage contrasts with the workload-dependent effectiveness of other strategies. We further evaluate sensitivity to the cache-read weight $w_r$ (\cref{app:price}). Across all tested weights on all three benchmark histories, CADOC attains the lowest input cost among the compared policies. These results show that its cost advantage persists across both different workloads and different relative cache-read costs, rather than depending on a single pricing configuration. \Cref{app:replay-protocol} provides the replay protocol.

\begin{figure}[htbp]
\centering
\includegraphics[width=\linewidth]{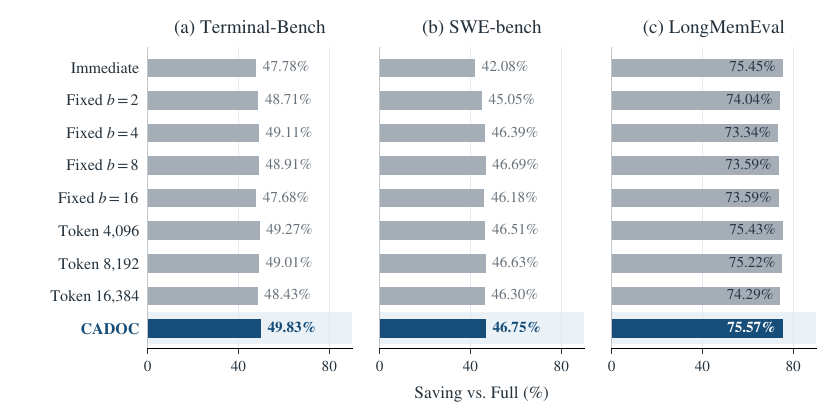}
\caption{Input-cost savings from offline replay of the conversation histories retained by FULL on three benchmarks. }
\label{fig:benchmark-replay}
\end{figure}

\section{Related Work}
\label{sec:related}

Agent context management spans prompt compression and observation masking~\citep{jiang2023llmlingua,lindenbauer2025complexity}, history consolidation, and external memory. ReSum and MEM1 maintain summaries or compact memory states~\citep{wu2025resum,zhou2025mem1}, ACON refines compression guidelines through failure analysis~\citep{kang2025acon}, and Context-Folding and AgentFold condense agent trajectories~\citep{sun2025contextfolding,ye2025agentfold}. ContextBudget explicitly learns when and how much to compress from the remaining context budget and incoming observation size~\citep{wu2026contextbudget}. Complementary systems such as MemGPT, Pichay, and LCM move information outside the active prompt while preserving access through memory tiers, demand paging, or summary hierarchies~\citep{packer2023memgpt,mason2026pichay,ehrlich2026lcm}. Beyond reducing prompt length, cache-aware approaches account for the cost implications of prefix reuse~\citep{lumer2026dontbreakcache}. Self-GC uses a side-channel semantic planner for recoverable context edits and weighs expected future savings against cache disruption and planning overhead~\citep{hao2026selfgc}. TokenPilot combines deterministic compaction and artifact recovery with model-based lifecycle estimation at an empirically selected check interval~\citep{xu2026tokenpilot}. CADOC instead operates on deterministic structured-object replacements, keeping surrounding dialogue visible. Its central distinction is cost-derived commit scheduling: replacement eligibility is separated from commit timing, and batches are triggered by accumulated waiting cost, current token shortening, and shared cache-reconstruction cost. This enables online adaptation without future-reuse forecasts and LLM calls.

\section{Conclusion}

We proposed CADOC, an online algorithm for recoverable, cache-aware context management that addresses the cost of disrupting prefix caches during compression. CADOC replaces structured objects with retrievable Cards and balances accumulated waiting loss against shared reconstruction cost to determine commit timing, without future information, LLM calls for Card construction or scheduling, or per-setting batch-size tuning. Its rule recovers the optimal integer batch under stationary assumptions. Experiments show the lowest aggregate input cost among evaluated policies across benchmark-history replays. On three continuing benchmark chains, CADOC reduces input cost by 34.12--47.13\% while maintaining task performance close to FULL. We thus provide a cost-derived approach to long-horizon context management, demonstrating that effective compression depends on both what is replaced and when replacements are committed.
\label{sec:main-end}

\clearpage
\section*{AI Use Statement}
Generative AI tools (ChatGPT with gpt-5.6-sol) assisted with drafting the manuscript and improving the writing.

\section*{Reproducibility Statement}
The supplementary material contains complete derivations, experimental configurations, aggregate evaluation results, a recorded retrieval case, and runtime diagnostics. Configuration details appear in \cref{app:protocol}, and proofs appear in \cref{app:proofs}.

\bibliographystyle{iclr2027_conference}
\bibliography{references}
\clearpage
\appendix
\section{Cost Accounting and Scheduling Proofs}
\label{app:proofs}

We derive the waiting--reconstruction decomposition, prove the stationary optimum and its first-crossing rule, and then state a finite-horizon bound for the online waiting--setup objective.

\subsection{A Common Reference for Comparing Schedules}
\label{app:cost-accounting}

Let $L_t=N_t+R_t$ be prompt length. Input cost satisfies
\begin{equation}
N_t+w_rR_t=w_rL_t+(1-w_r)N_t.
\label{eq:cost-decomposition}
\end{equation}
The first term charges every input token at the cached rate; the second adds the uncached-token premium.

Fix the stationary block stream and its eligibility times from \cref{sec:eoq}. Define an ideal accounting reference with prompt length $L_t^0$ and uncached count $N_t^0$: it replaces every object at its first eligible request, receives cache hits for those replacements, and processes the same newly appended material. This reference is independent of batch size.

After the actual boundary decision, let $U_t$ count removable tokens still present in eligible, uncommitted objects, and let $V_t$ count additional uncached tokens caused by a commit. Then $L_t-L_t^0=U_t$ and, under stable prefix reuse, $N_t-N_t^0=V_t$. Subtracting the reference cost in \cref{eq:cost-decomposition} gives
\begin{equation}
(N_t+w_rR_t)-(N_t^0+w_rR_t^0)
=w_rU_t+(1-w_r)V_t.
\label{eq:request-excess}
\end{equation}
Thus delayed shortening and reconstruction contribute separate terms.

Before a batch of $b$ objects is committed, the wait requests carry $1,2,\ldots,b-1$ pending objects. With $r=w_rg$, their total excess is
\begin{equation}
\sum_{j=1}^{b-1}w_rjg=\frac{rb(b-1)}2=W_b.
\end{equation}
At the commit request, all pending objects become Cards, so $U_t=0$. The cached interval $I_b=(b+h-1)x$ shortens by $bg$, leaving $V_t=I_b-bg$ tokens to reconstruct. Including auxiliary overhead $F_0+bf$, the cycle excess is
\begin{align}
E_b
&=W_b+(1-w_r)(I_b-bg)+F_0+bf \nonumber\\
&=b\underbrace{\bigl[f+(1-w_r)(x-g)\bigr]}_{K}
+\underbrace{F_0+(1-w_r)(h-1)x}_{Q_h}+\frac{rb(b-1)}2 \nonumber\\
&=bK+Q_h+W_b.
\label{eq:cycle-excess}
\end{align}
Each cycle contains $b$ requests and processes $b$ eligible objects, giving excess $E_b/b=\ell_h(b)$ per request and per object. Initial and final incomplete cycles vanish from this average as the stream grows with fixed $b$.

The first-request comparison and this common-reference accounting agree because
\begin{equation}
(1-w_r)I-G=(1-w_r)(I-G)-w_rG:
\end{equation}
the reference already receives the $w_rG$ shortening benefit on the commit request, leaving $(1-w_r)(I-G)$ as excess reconstruction.

\subsection{Stationary Optimum and First Crossing}
\label{app:stationary-optimum}
\label{app:crossing-proof}

\begin{proof}[Proof of \cref{prop:eoq}]
Since $K$ is independent of $b$, minimize $Q_h/b+r(b-1)/2$. For $Q_h>0$ and $r>0$,
\begin{equation}
\ell_h'(b)=-\frac{Q_h}{b^2}+\frac r2,
\qquad
\ell_h''(b)=\frac{2Q_h}{b^3}>0.
\end{equation}
The unconstrained minimizer is $\sqrt{2Q_h/r}$; imposing $b\geq1$ gives \cref{eq:sqrt}. If $Q_h=0<r$, the objective increases with $b$, giving $b=1$.

For integer $b\geq1$, define
\begin{equation}
d_b=\ell_h(b+1)-\ell_h(b)=\frac r2-\frac{Q_h}{b(b+1)}.
\end{equation}
These differences are nondecreasing. For $b\geq2$, optimality is therefore equivalent to $d_{b-1}\leq0$ and $d_b\geq0$, or
\begin{equation}
\frac{rb(b-1)}2\leq Q_h\leq\frac{rb(b+1)}2.
\end{equation}
For $b=1$, only $d_1\geq0$ is required, giving $Q_h\leq r$; the lower bound follows from $Q_h\geq0$.
\end{proof}

Choosing the smaller batch in a tie gives
\begin{equation}
b^\dagger=
\max\left\{1,\left\lceil\frac{\sqrt{1+8Q_h/r}-1}{2}\right\rceil\right\}.
\label{eq:integer-closed-form}
\end{equation}
This is the smallest positive integer with $rb(b+1)/2\geq Q_h$. Equality gives adjacent optima $b$ and $b+1$; otherwise the optimum is unique. Under a fixed integer cap $B_{\max}$, the smallest optimal feasible size is $\min\{b^\dagger,B_{\max}\}$.

\paragraph{First crossing.}
Starting from an empty pending sequence, one object becomes eligible per request. When the next batch first contains $b$ objects, their waiting ages are $b-1,b-2,\ldots,0$, so
\begin{equation}
G_b=bg,\qquad W_b=\frac{rb(b-1)}2,\qquad
W_b+w_rG_b=\frac{rb(b+1)}2=W_{b+1}.
\end{equation}
For $r>0$, the first crossing $W_b+w_rG_b\geq Q_h$ consequently selects $b^\dagger$. It also directly establishes both optimality inequalities: the current crossing gives the upper bound on $Q_h$, while for $b>1$ the preceding noncrossing gives
\begin{equation}
W_b=W_{b-1}+w_rG_{b-1}<Q_h.
\end{equation}
For $b=1$, the lower bound is $W_1=0\leq Q_h$. Equality at the current crossing selects the smaller adjacent optimum. A confirmed full commit resets the ledger, and the argument repeats in the next cycle. This recovery uses equal object gains, one eligible block per request, constant $Q_h$ and $w_r$, and stable prefix caching. A fixed batch cap truncates the selected size to $B_{\max}$.

\paragraph{Zero-cost cases.}
If $r=0<Q_h$, $K+Q_h/b$ strictly decreases: the largest feasible batch is optimal under a fixed cap, and there is no finite optimum without a cap. If $Q_h=0<r$, immediate commitment is optimal; if both vanish, all batch sizes have equal modeled excess. With $h=1$, the cache timing gives $Q_h=F_0$, since no cached hot-tail suffix remains to reconstruct.

\subsection{Online Quantities and Ledger Updates}
\label{app:online-scope}

\paragraph{Pending region and shared suffix.}
For a candidate commit inside a reusable prefix, split the affected interval at the end of the newest pending block. Let $A_t$ cover the interval from the first edit to that boundary, including unchanged content between replacements. The remaining cached suffix has length $H_t^{\mathrm{shared}}$. Using $Q_t=F_{0,t}+(1-w_r)H_t^{\mathrm{shared}}$, the reconstruction premium is
\begin{align}
F(\mathcal P_t)+(1-w_r)(A_t-G_t+H_t^{\mathrm{shared}})
&=\underbrace{F(\mathcal P_t)-F_{0,t}+(1-w_r)(A_t-G_t)}_{\text{pending-region cost}}
+Q_t.
\label{eq:online-premium-split}
\end{align}
In the stationary model, these terms are $bK$ and $Q_h$: the pending region has constant cost per object, while batching amortizes $Q_h$. This gives the shared-cost threshold in \cref{eq:trigger}. With heterogeneous blocks, internal gaps, or nonlinear overhead, the pending-region cost can vary per object; the runtime applies the shared-cost crossing to its current estimates.

The split uses the end of the newest pending \emph{block}, so trailing dialogue within that block belongs to $A_t$. Actual edit positions also determine the interval: if the first edit starts $s$ tokens into its block, the homogeneous interval is $I_b=(b+h-1)x-s$. The stationary derivation uses $s=0$; runtime estimates use the actual edit positions and reusable cache frontier.

\paragraph{Waiting ledger.}
A request carrying pending contents raw adds $w_rG_t$ to their ledger, and newly eligible contents enter with zero accumulated loss. For fixed additive gains this gives
\begin{equation}
W_{t+1}=W_t+w_rG_t,
\qquad
W_t=w_r\sum_i g_i a_i(t),
\end{equation}
where $a_i(t)$ counts eligible requests carrying occurrence $i$ raw. A successful model response confirms a full commit and clears its entries. A retrieved tool-result occurrence starts its own exposure and waiting ledger, independently of the historical Card, and can later become a retrieval receipt. The runtime aggregates persistent per-delta entries and estimates $G_t$ from whole-prompt lengths; its recorded ledger uses these estimates rather than canonical per-object token sums.

\paragraph{Cache state.}
Reconstruction includes only tokens that would otherwise be reused, excluding natural cache misses and the newest uncached block. The increment $w_rG_t$ is the cached retention cost; for pending payloads that miss the cache, it serves as a cached-rate estimate. Irregular arrivals, retrieval, and changing $Q_t$ are handled through the current quantities. The following online result treats exact waiting costs and schedule-independent shared setup costs.

\subsection{A Competitive Bound for the Waiting--Setup Objective}
\label{app:reduced-batching-bound}

To isolate the online batching trade-off, fix requests $t=1,\ldots,T$ and each object's eligibility time $a_i\in\{1,\ldots,T\}$. An eligible object $i$ incurs a fixed cost $r_i\geq0$ on every request that retains it raw. A commit replaces all pending objects and incurs shared setup cost $Q_t$. Eligibility, object gains, and $Q_t$ are independent of the schedule; objects remain pending until committed, and either action is feasible at every boundary. Let $\mathcal U_t^\pi$ be the objects left pending after schedule $\pi$'s decision at $t$. The waiting--setup objective is
\begin{equation}
J(\pi)=\sum_{t=1}^{T}\sum_{i\in\mathcal U_t^\pi}r_i+\sum_{t:\,\pi\text{ commits at }t}Q_t.
\label{eq:reduced-batching-objective}
\end{equation}
Objects may remain raw when the history ends. This is the setup-versus-delay objective of online acknowledgment batching~\citep{dooly2001tcp}. Here it isolates waiting and shared reconstruction from the full input-cost decomposition. With additive token gains and cached retention, $r_i=w_rg_i$.

\begin{proposition}[Online waiting--setup bound]
\label{prop:reduced-batching-bound}
Suppose $0<Q_{\min}\leq Q_t\leq Q_{\max}$. Let $W_t$ be the waiting cost already incurred by the current pending batch and $\rho_t=\sum_{i\in\mathcal P_t}r_i$ its cost if retained on request $t$. The rule that commits exactly when $\mathcal P_t\ne\varnothing$ and $W_t+\rho_t\geq Q_t$ satisfies
\begin{equation}
J(\mathrm{ALG})\leq\frac{2Q_{\max}}{Q_{\min}}J(\mathrm{OPT}),
\label{eq:reduced-batching-bound}
\end{equation}
where $\mathrm{OPT}$ minimizes \cref{eq:reduced-batching-objective} with full knowledge of the history. In particular, the rule is $2$-competitive for constant positive $Q_t$.
\end{proposition}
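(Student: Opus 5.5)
The plan is a phase-charging argument in the style of online TCP acknowledgment. Let ALG commit at boundaries $c_1<c_2<\dots<c_m$, and set $c_0=0$. Phase $k$ is the window of requests $(c_{k-1},c_k]$, and its batch is the set of objects committed at $c_k$. Because every commit clears all pending objects, these are exactly the objects that became eligible in that window. After $c_m$ there is a possibly empty final phase $(c_m,T]$ whose objects are never committed. I will bound ALG's cost phase by phase, and then show that OPT pays a matching amount that can be attributed to each phase separately.

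\emph{Upper bound on ALG.} In phase $k$, ALG pays $Q_{c_k}\le Q_{\max}$ for the commit, plus the waiting cost of the batch, which is $W_{c_k}$ (nothing is charged at $c_k$ itself, since after that decision nothing is pending). If $c_k-1$ lies in phase $k$ and the batch was pending there, then ALG did not commit at $c_k-1$, and by the ledger update
\[
W_{c_k}=W_{c_k-1}+\rho_{c_k-1}<Q_{c_k-1}\le Q_{\max}.
\]
Otherwise $W_{c_k}=0$. So each completed phase costs ALG less than $2Q_{\max}$. For the final phase, ALG never crosses, so its total waiting equals $W_T+\rho_T<Q_T\le Q_{\max}$. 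Call this final waiting $W_{\mathrm{fin}}$. Altogether,
\[
J(\mathrm{ALG})\le 2mQ_{\max}+W_{\mathrm{fin}}.
\]

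\emph{Lower bound on OPT.} Fix a completed phase $k$ and consider OPT's actions in the same window $(c_{k-1},c_k]$. In the first case, OPT commits somewhere in the window and pays at least $Q_{\min}$ there. In the second case, OPT makes no commit in the window. Then every phase-$k$ object stays pending in OPT from its eligibility time through the decision at $c_k$. OPT therefore pays exactly the waiting that ALG's ledger measures up to and including $c_k$, namely $W_{c_k}+\rho_{c_k}\ge Q_{c_k}\ge Q_{\min}$, and this crossing is precisely why ALG committed at $c_k$. The charges are disjoint across phases: commit charges sit in disjoint windows, and waiting charges are on disjoint object sets. Hence OPT pays at least $mQ_{\min}$ on completed phases. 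For the final phase, either OPT commits in $(c_m,T]$ and pays $Q_{\min}\ge (Q_{\min}/Q_{\max})W_{\mathrm{fin}}$, or it never commits there and pays at least $W_{\mathrm{fin}}$ on those objects. Combining the two bounds gives $J(\mathrm{ALG})\le (2Q_{\max}/Q_{\min})J(\mathrm{OPT})$, and setting $Q_{\max}=Q_{\min}$ gives $2$-competitiveness.

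The main obstacle is the index bookkeeping in the second case. I need ALG's ledger quantity $W_{c_k}+\rho_{c_k}$ to coincide exactly with OPT's waiting charge on the same objects and requests under the convention that $\mathcal U_t^\pi$ is measured after the decision at $t$. I also need a phase that starts with a newly eligible object at $c_k$ itself, where $W=0$, to be handled correctly. I would fix this first by writing $W_t+\rho_t=\sum_{i}r_i\,|\{s\le t: s\ge a_i\}|$ for the current batch and checking that it matches OPT's charge term by term.
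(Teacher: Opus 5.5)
Your proposal is correct and follows essentially the same phase-charging argument as the paper. Each completed phase costs ALG at most $2Q_{\max}$ because of the preceding noncrossing, and OPT is charged either a setup in the same window or the phase objects' waiting $W_{c_k}+\rho_{c_k}\ge Q_{\min}$; the final uncompleted phase is handled separately. The index check you flag works out: with $\mathcal U_t^\pi$ measured after the decision, OPT's waiting on the phase objects over $[a_i,c_k]$ is exactly $\sum_i r_i(c_k-a_i+1)=W_{c_k}+\rho_{c_k}$, as in the paper.
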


\begin{proof}
Let $\tau_1<\cdots<\tau_m$ be the algorithm's commit times and set $\tau_0=0$. A completed phase is the request interval $(\tau_{j-1},\tau_j]$; its objects are those becoming eligible in that interval. Since every commit clears the pending set, the algorithm's phase cost is $W_{\tau_j}+Q_{\tau_j}$. If the phase has more than one request, the preceding noncrossing gives
\begin{equation}
W_{\tau_j}=W_{\tau_j-1}+\rho_{\tau_j-1}<Q_{\tau_j-1}\leq Q_{\max}.
\end{equation}
For a one-request phase, $W_{\tau_j}=0$. Thus every completed phase costs at most $2Q_{\max}$.

If OPT commits during that phase, assign one such setup cost to the phase; it is at least $Q_{\min}$. Otherwise, OPT retains all the phase's objects on every request from their eligibility through $\tau_j$. Assign their waiting cost in this interval, which is
\begin{equation}
W_{\tau_j}+\rho_{\tau_j}\geq Q_{\tau_j}\geq Q_{\min}.
\end{equation}
The term $\rho_{\tau_j}$ is the cost of the current request, so this argument also applies when $\tau_j=T$. Each completed phase therefore costs ALG at most $2Q_{\max}/Q_{\min}$ times its assigned OPT cost.

In a final uncompleted phase, ALG's total cost is zero if no objects arrive; otherwise the last noncrossing gives $W_T+\rho_T<Q_T\leq Q_{\max}$. If OPT commits in this phase, assign one setup cost, at least $Q_{\min}$. If it does not, assign the waiting cost of the phase's objects, equal to ALG's cost. This phase satisfies the same bound. All assigned costs are distinct: setups belong to disjoint phases, and waiting charges belong to distinct phase objects and request intervals. Their sum is at most $J(\mathrm{OPT})$, proving \cref{eq:reduced-batching-bound}.
\end{proof}

If pending-region reconstruction contributes the same nonnegative total $B$ to every feasible schedule, the bound also holds for $B+J$: for $\alpha=2Q_{\max}/Q_{\min}\geq1$, $B+J(\mathrm{ALG})\leq\alpha[B+J(\mathrm{OPT})]$.

\FloatBarrier
\section{Experimental Configuration and Evaluation Protocols}
\label{app:protocol}

\paragraph{Benchmark chains.}
The continuing task lists contain 89 Terminal-Bench 2.1 tasks, the first 18 evaluated LongMemEval-V2 tasks, and 50 SWE-bench Verified tasks. All runs request the \texttt{gpt-5.6-terra} route. Each benchmark uses one continuing chain per arm, with history retained across tasks. Task identities and positions align where task-level exports are available; model and tool trajectories develop separately. The evaluated scope includes processed requests on the completed task lists. Terminal-Bench includes a resumed execution, with cache continuity reconstructed at the continuation boundary. All benchmarks use \cref{eq:inputcost}; prompt accounting uses \texttt{o200k\_base} and the versioned serializer \texttt{logical-prompt-json-v1}.

\paragraph{HCE replay and aggregation.}
The three frozen HCE histories cover configuration, debugging, and feature implementation. They contain 25, 21, and 37 source-request boundaries, respectively, and 44 frozen Card occurrences in total. Object identities, immutable versions, original bytes, and prepared Cards are fixed across configurations at $h=2,4,8,16$. Prompts use \nolinkurl{canonical_json_sorted_utf8:v1}, with \nolinkurl{tiktoken:o200k_base:0.11.0} tokenization and adjacent-request prefix reuse under \nolinkurl{canonical_prefix_reference}. Reported costs sum $N+0.1R$ across all three histories. Aggregate saving is $1-\sum_j C_{p,j}/\sum_j C_{\mathrm{Full},j}$ for policy $p$ and histories $j$.

\paragraph{Hot tail and production configuration.}
All strategies use both the block cap $h$ and the 12,800-token hot-tail cap. Production CADOC additionally confirms an occurrence's initial raw exposure after a successful model response. Its token estimate covers the suffix starting at the earliest protected occurrence, whereas baseline replay logs the corresponding recent-block token sum. The exported \texttt{hot\_tokens} and \texttt{hot\_block\_count} therefore describe each replay path's partition. Effective configurations and governing source files accompany the code.

CADOC uses rough prompt/cache estimates, a persistent per-delta waiting ledger, and zero auxiliary commit overhead: $Q_t=(1-w_r)H_t^{\mathrm{shared}}$. Pending guards fire above $2h$ runtime inference groups or 25,600 raw tokens; baseline capacity guards use 16 pending blocks or 65,536 raw tokens. All 35 observed CADOC commits follow the economic crossing and are confirmed after a successful response. Fixed $b=16$, Token 32,768, and Token 65,536 make no commits on these finite HCE histories and therefore equal Full Context; the main figure shows the eight active configurations.

\paragraph{Continuation evaluation.}
Six policies are evaluated on 31 end-of-history probes, each repeated with seeds 20260907, 20260908, and 20260909. Pass counts pool the 93 evaluations. Session-macro scores average repeats within probes, probes within each history, and the three histories equally. Answering and judging request \texttt{gpt-5.6-terra}. \Cref{app:quality-grid} reports the aggregate policy comparison.

\subsection{Fixed-History Replay of Benchmark Traces}
\label{app:replay-protocol}

The benchmark replay uses $h=8$ and $w_r=0.1$ on 1,801 recorded FULL request boundaries: 1,013 spanning Terminal-Bench tasks, 691 from SWE-bench, and 97 spanning LongMemEval tasks. Source outputs and deterministic Cards are fixed, all configurations use a shared inference candidate pool, and CADOC runs its production scheduler and lifecycle. Context and cache state continue across tasks; Terminal-Bench continuation follows the retained source-task order. Each stream ends at its last observed input. The replay performs no model calls. 

\FloatBarrier
\section{Object Implementation and Recorded Retrieval}
\label{app:cards}

The Card shown in \cref{sec:substrate} comes from LongMemEval-V2 task \texttt{0f970f01}, which asks which column lies between Price and Delivery time in a ServiceNow Catalog Items list. A terminal search returns trajectory locations as a structured tool result. CADOC preserves its original bytes and replaces its historical occurrence with a deterministic Card containing a versioned object reference and structural metadata.

\paragraph{Retrieval and subsequent use.}
Request 11 contains the original result; request 12 contains its Card after an economic-crossing commit. The model then issues the recorded call:
\begin{quote}\begin{minipage}{\linewidth}\small\ttfamily\raggedright
\{"object\_ref":\\
\quad "object://obj\_0722dfa749e04b58a0094af4@v1",\\
\quad "reason": "Need the exact trajectory ID and state index returned by the evidence search for the Catalog Items column-header question."\}
\end{minipage}\end{quote}
Request 13 contains the call and its full-object result. The original payload and the returned \nolinkurl{retrieved_object.content} both contain 9,925 UTF-8 bytes and have identical recomputed SHA-256 hashes. The retrieved content identifies trajectory \texttt{3c588c61}, state 33; the next action in request 14 uses that location:
\begin{quote}\small\ttfamily python3 enterprise/show\_trajectory.py 3c588c61 --start 33 --end 39\end{quote}
The task completes correctly. Runtime estimates are 2,473 tokens for the object and 174 for its Card.

Retrieval returns the immutable referenced version as a new tool-result occurrence. The historical Card remains in place, while the returned occurrence receives its initial raw exposure and then follows the same hot-tail and commit lifecycle, including replacement by a retrieval receipt. The evidence package contains the complete Card, original and returned payloads, request snapshots, grading record, and registration/retrieval source code.

\section{Aggregate HCE Continuation Quality}
\label{app:quality-grid}

\Cref{tab:quality-grid} reports the six-policy comparison at each hot-tail setting, aggregated over the same 93 probe evaluations using the protocol in \cref{app:protocol}. Full Context uses the same unchanged control across all four settings.
\begin{table}[ht]
\centering\small
\caption{Continuation passes and session-macro scores for six policies across four hot-tail settings.}
\label{tab:quality-grid}
\setlength{\tabcolsep}{4.5pt}
\begin{tabular}{lrrrrrrrr}
\toprule
& \multicolumn{2}{c}{$h=2$} & \multicolumn{2}{c}{$h=4$} & \multicolumn{2}{c}{$h=8$} & \multicolumn{2}{c}{$h=16$}\\
\cmidrule(lr){2-3}\cmidrule(lr){4-5}\cmidrule(lr){6-7}\cmidrule(lr){8-9}
Policy & Passes & Macro & Passes & Macro & Passes & Macro & Passes & Macro\\
\midrule
Full & 57/93 & 4.318 & 57/93 & 4.318 & 57/93 & 4.318 & 57/93 & 4.318\\
Immediate & 64/93 & 4.326 & 59/93 & 4.274 & 64/93 & 4.285 & 63/93 & 4.404\\
Fixed $b=4$ & 65/93 & 4.318 & 64/93 & 4.346 & 62/93 & 4.305 & 61/93 & 4.354\\
Fixed $b=8$ & 63/93 & 4.336 & 59/93 & 4.263 & 64/93 & 4.353 & 69/93 & 4.538\\
Token 16,384 & 65/93 & 4.421 & 60/93 & 4.247 & 67/93 & 4.492 & 67/93 & 4.447\\
\textbf{CADOC} & 61/93 & 4.297 & 61/93 & 4.204 & 65/93 & 4.418 & 62/93 & 4.330\\
\bottomrule
\end{tabular}
\end{table}

\FloatBarrier

\section{Recorded Scheduling Decisions}
\label{app:runtime}
\label{app:case}
\label{app:perturbation}

\Cref{tab:crossing-case} reconstructs consecutive production decisions at $h=4$ from the runtime's $W$, $G$, and $Q$, with $G$ given by its rough keep/commit prompt difference. The inequality changes from $4,787.1<5,688.9$ to $6,274.9\geq6,124.5$, triggering a commit. That batch replaces five objects containing 15,006 raw tokens with 764 Card tokens, a 94.91\% reduction.
\begin{table}[!ht]
\centering\small
\caption{Two decisions to show the process; boundaries are zero-based request indices.}
\label{tab:crossing-case}
\setlength{\tabcolsep}{6pt}
\begin{tabular}{rrrrrl}
\toprule
Boundary & $W$ & $G$ & $Q$ & $W+0.1G$ & Decision\\
\midrule
14 & 3,424.9 & 13,622 & 5,688.9 & 4,787.1 & Wait\\
15 & 4,787.1 & 14,878 & 6,124.5 & 6,274.9 & Commit\\
\bottomrule
\end{tabular}
\end{table}

Across all 184 nonempty-pending boundaries, recomputing $W+0.1G-Q$ matches every recorded decision. Holding preceding history fixed, 172 of these decisions remain unchanged throughout independent $\pm5\%$ perturbations of the recorded $W,G,Q$. 
\FloatBarrier

\section{Sensitivity to Cache-Read Weight}
\label{app:price}

\paragraph{Protocol.}
We use the benchmark histories from \cref{app:replay-protocol} and test $w_r\in\{0.05,0.1,0.25,0.5,1\}$ with input cost $C=N+w_rR$. CADOC executes its production planner separately at each weight. The nine baselines have weight-independent triggers, so their fixed token trajectories are repriced at each weight. Within each benchmark, request boundaries, inference candidates, Card bytes, and context budgets are fixed. Each history starts with an empty cache and ends at its last recorded input.

\paragraph{Results.}
CADOC attains the lowest input cost among the ten compared policies in all fifteen benchmark--weight combinations (\cref{fig:wr-sensitivity,tab:wr-sensitivity}). It ties with Immediate at $w_r=1$ on all three histories and at $w_r=0.5$ on LongMemEval, and is uniquely best in the remaining combinations.

\paragraph{Immediate commitment without a cache discount.}
At $w_r=1$, cached and uncached input tokens have equal cost. The evaluated scheduler uses zero auxiliary commit overhead, $F_{0,t}=0$, so \cref{eq:Qonline} gives
\begin{equation}
Q_t=F_{0,t}+(1-w_r)H_t^{\mathrm{shared}}=0.
\label{eq:wr-one-setup}
\end{equation}
For a nonempty eligible batch, $W_t\geq0$ and $G_t\geq0$, so the commit condition in \cref{eq:trigger} reduces to $W_t+G_t\geq0$ and is satisfied at every feasible boundary. CADOC therefore commits as soon as replacement is permitted, matching Immediate's commit timing and input cost. Immediate is the lowest-cost competing compressor at this endpoint on all three histories; CADOC's relative advantage is consequently zero, while its savings relative to Full remain positive.

\begin{figure}[!ht]
\centering
\includegraphics[width=\linewidth]{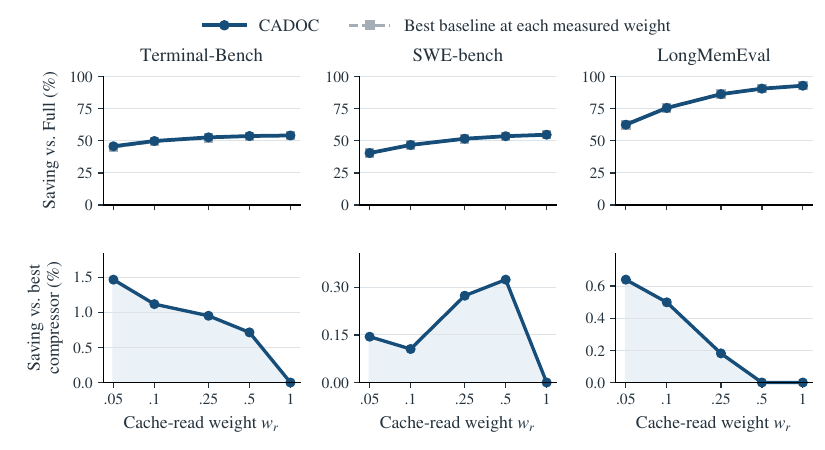}
\caption{Cache-read weight sensitivity. Top: savings relative to Full for CADOC and the best of nine baselines, including Full, at each weight. Bottom: savings relative to the best of eight other compression policies, with separate vertical scales starting at zero. Markers are measured points; lines guide the eye.}
\label{fig:wr-sensitivity}
\end{figure}

\begin{table}[ht]
\centering\small
\caption{CADOC input-cost savings relative to Full at five cache-read weights. Pooled savings use the ratio of summed costs across the three benchmark histories.}
\label{tab:wr-sensitivity}
\setlength{\tabcolsep}{10pt}
\begin{tabular}{rrrrr}
\toprule
$w_r$ & Terminal-Bench & SWE-bench & LongMemEval & Pooled\\
\midrule
0.05 & 45.73\% & 40.48\% & 62.52\% & 45.06\%\\
0.1 & 49.83\% & 46.75\% & 75.57\% & 50.39\%\\
0.25 & 52.66\% & 51.61\% & 86.28\% & 54.26\%\\
0.5 & 53.71\% & 53.57\% & 90.55\% & 55.75\%\\
1 & 54.12\% & 54.72\% & 92.85\% & 56.48\%\\
\bottomrule
\end{tabular}
\end{table}

\FloatBarrier

\end{document}